\title{In-Context Compositional Learning via Sparse Coding Transformer}
\author{Wei Chen, Jingxi Yu, Zichen Miao, Qiang Qiu \\
Purdue University, IN, USA\\
\texttt{\{chen2732, yu667, miaoz, qqiu\}@purdue.edu}
}
\begin{document}

\newtheorem{theorem}{Theorem}[section]
\newtheorem{proposition}[theorem]{Proposition}
\newtheorem{lemma}[theorem]{Lemma}
\newtheorem{corollary}[theorem]{Corollary}
\newtheorem{definition}[theorem]{Definition}
\newtheorem{assumption}[theorem]{Assumption}
\newtheorem{remark}[theorem]{Remark}

\newcommand{\Wq}{\mathbf{W}_q}
\newcommand{\Wk}{\mathbf{W}_k}
\newcommand{\Wqk}{\mathbf{W}_{qk}}
\newcommand{\Wv}{\mathbf{W}_v}
\newcommand{\Wo}{\mathbf{W}_o}
\newcommand{\Wvo}{\mathbf{W}_{vo}}
\newcommand{\Q}{\mathbf{Q}}
\newcommand{\K}{\mathbf{K}}
\newcommand{\V}{\mathbf{V}}
\newcommand{\W}{\mathbf{W}}
\newcommand{\attn}{\mathbf{A}}
\newcommand{\inp}{\mathbf{X}}
\newcommand{\inpv}{\mathbf{x}}
\newcommand{\out}{\mathbf{O}}
\newcommand{\feat}{\mathbf{Z}}
\newcommand{\filter}{\mathbf{F}}
\newcommand{\coeff}{\boldsymbol{\alpha}}
\newcommand{\atoms}{\mathbf{D}}

\maketitle

\begin{abstract}
  Transformer architectures have achieved remarkable success across language, vision, and multimodal tasks, and there is growing demand for them to address 
  in-context compositional learning tasks.
  %
  In these tasks, models solve the target problems by inferring compositional rules from context examples, which are composed of basic components structured by underlying rules.
  %
  However, some of these tasks remain challenging for Transformers, which are not inherently designed to handle compositional tasks and offer limited structural inductive bias.
  %
  %
  %
  %
  In this work, inspired by the principle of sparse coding, we propose a reformulation of the attention to enhance its capability for compositional tasks. In sparse coding, data are represented as sparse combinations of dictionary atoms with coefficients that capture their compositional rules.
  Specifically, we reinterpret the attention block as a mapping of inputs into outputs through projections onto two sets of learned dictionary atoms: an \emph{encoding dictionary} and a \emph{decoding dictionary}. The encoding dictionary decomposes the input into a set of coefficients, which represent the compositional structure of the input. To enhance structured representations, we impose sparsity on these coefficients. The sparse coefficients are then used to linearly combine the decoding dictionary atoms to generate the output. Furthermore, to assist compositional generalization tasks, we propose estimating the coefficients of the target problem as a linear combination of the coefficients obtained from the context examples.
  %
  We demonstrate the effectiveness of our approach on the S-RAVEN and RAVEN datasets. For certain compositional generalization tasks, our method maintains performance even when standard Transformers fail, owing to its ability to learn and apply compositional rules.
\end{abstract}    
\section{Introduction}
\vspace{-5pt}
\label{sec:intro}

\begin{figure}[t]
\vspace{-8pt}
  \centering
  \hfill
 \begin{subfigure}[b]{0.67\textwidth}
     \centering
     \includegraphics[trim={0pt 10pt 0pt 0pt}, clip, width=\textwidth]{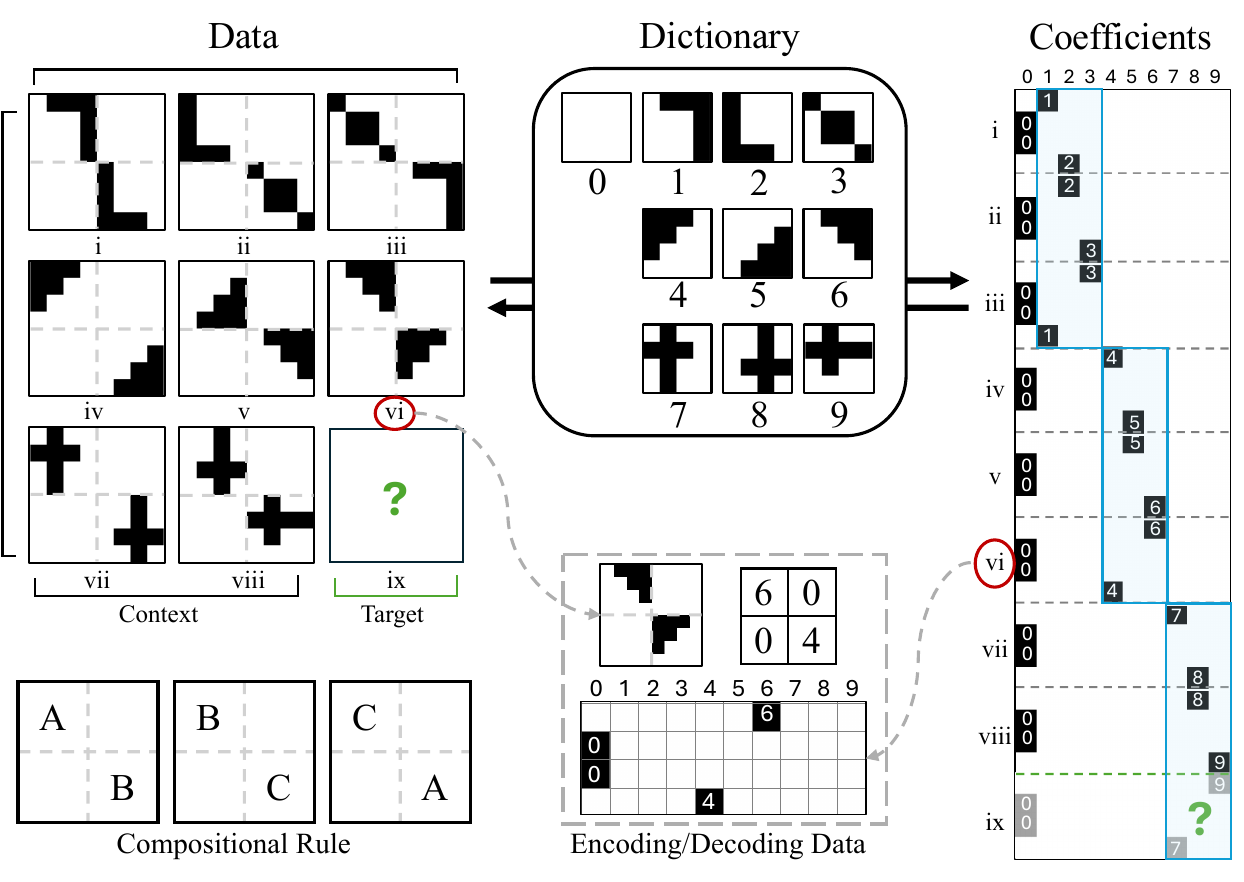}
     \caption{}
 \end{subfigure}
 \hfill
 \begin{subfigure}[b]{0.32\textwidth}
     \centering
     \includegraphics[trim={0pt 0pt 0pt 0pt}, clip, width=\textwidth]{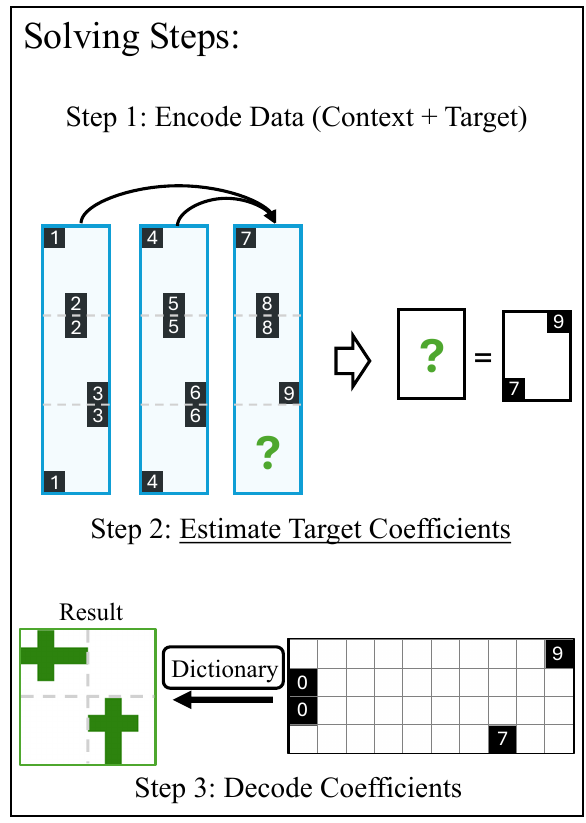}
     \caption{}
 \end{subfigure}
 \hfill
 \vspace{-8pt}
  \caption{Illustration of the in-context compositional learning task. The input data includes both the context tasks and the target task. The goal is to solve the target task by inferring and applying the compositional rule observed in the context tasks.
  \textbf{(a)} Applying the principles of sparse coding to represent the data. Given a dictionary, the input data can be \emph{sparsely} represented using a set of coefficients that encode underlying \emph{compositional rules}. 
  \textbf{Encoding/decoding data:} An example of one task is composed of four elements from the dictionary, with indices "6, 0, 0, 4." After one-hot embedding, we obtain a $4 \times 10$ matrix, where each nonzero entry corresponds to a specific element in the dictionary. By stacking all 9 examples, we obtain a $36\times 10$ matrix representing the coefficients.
  \textbf{Compositional rules:} Each row of the input data follows an underlying pattern. If the first two shapes are constructed as $(A, \emptyset, \emptyset, B)$ and $(B, \emptyset, \emptyset, C)$, where $A$, $B$, and $C$ correspond to unique elements in the dictionary, $\emptyset$ means an empty shape, then the third shape should be $(C, \emptyset, \emptyset, A)$.
  \textbf{(b)} Representing the compositional rules as coefficients provides an effective way to estimate the coefficients of the target task from those of the context tasks. Once inferred, these coefficients can be decoded into the final output using the dictionary. 
  Details of this task are described in Section~\ref{sec:alys}.}
  \label{fig:main}
\end{figure}

Recent advancements in artificial intelligence (AI) have led to significant breakthroughs in various domains~\cite{vit, sam, rombach2022high, vaswani2017attention}. Models such as large-scale Transformers have demonstrated remarkable capabilities in natural language understanding, image classification, and multimodal reasoning. 
However, despite these successes, solving in-context compositional learning tasks remains a major challenge~\cite{schug2024attention}. 
As illustrated in Figure~\ref{fig:main}, such tasks involve data composed of basic components arranged by underlying compositional rules, requiring models to infer and transfer these structural patterns from context examples while achieving good representation and generalization. 

Transformers primarily rely on dense attention mechanisms~\cite{vaswani2017attention} without an explicit framework for representing compositional rules. As a result, they struggle to capture structured relationships and lack an effective mechanism to transfer these inferred rules across examples. The absence of structural inductive bias limits their ability to generalize in tasks that demand compositional understanding.

In this paper, we extend the attention mechanism by explicitly encoding compositional rules, drawing inspiration from the principles of sparse coding. In sparse coding~\cite{olshausen1996emergence}, signals are expressed as sparse combinations of basic elements, with the resulting coefficients capturing the compositional structure of the signal~\cite{chen2023inner, chen2025sparse, miao2021continual}. 
Specifically, as shown in Figure~\ref{fig:method}, we reinterpret the attention mechanism as a mapping of inputs into outputs through projections onto two sets of learned dictionary atoms: an \emph{encoding dictionary} and a \emph{decoding dictionary}. The encoding dictionary decomposes the input into a set of coefficients, which represent the compositional structure of the input. The coefficients are then used to linearly combine the decoding dictionary atoms to generate the output.



In the attention mechanism~\cite{vaswani2017attention}, the attention map is generated by computing the inner product between inputs transformed by the query and key matrices. 
In contrast, our approach reinterprets this process as projecting the input onto a learned dictionary, \textit{i.e.}, encoding dictionary, parameterized by the query and key matrices to obtain the coefficients. 
To enhance structured representations, we introduce sparsity into the coefficients, allowing them to explicitly represent the compositional rules inherent in the input.
These sparse coefficients are then used to combine another dictionary, \textit{i.e.}, decoding dictionary, parameterized by the value matrix, to generate the final output.

By projecting the input of both the context and target tasks onto a shared encoding dictionary to obtain their respective coefficients, we can effectively infer the compositional rules of the target tasks. Inspired by the lifting scheme~\cite{sweldens1998lifting}, we estimate coefficients of the target task through a simple linear combination of the context task coefficients.

We first assess the effectiveness of our method on a toy example with a simple compositional rule, demonstrating that our approach successfully learns and generalizes the rule, whereas the standard Transformer fails in this case. The results are shown in Figure~\ref{fig:toy_exp}.
We then evaluate our method on the in-context compositional learning dataset, such as S-RAVEN~\cite{schug2024attention} and RAVEN~\cite{zhang2019raven}.
Our approach consistently outperforms standard Transformer baselines. These results indicate that integrating the attention mechanism with sparse coding enhances the ability of models to learn and apply compositional rules.

We summarize our contributions as follows:
\begin{itemize}
    \item We reformulate the attention mechanism, inspired by sparse coding, as a mapping of inputs to outputs via projections onto two learned dictionaries: an encoding dictionary and a decoding dictionary.
    \item We explicitly represent inputs as sparse combinations of the encoding dictionary to encode compositional rules.
    \item We enable effective transfer of compositional rules across tasks by estimating target coefficients via a simple linear combination of context coefficients.
    \item We demonstrate the effectiveness of our approach on in-context compositional learning tasks, maintaining good performance even in cases where standard Transformers fail.
\end{itemize}

\section{Method}
\label{sec:method}
In this section, we first outline the problem setting of in-context compositional learning and then introduce our framework, inspired by sparse coding, which reformulates the Transformer architecture to better capture compositional structure.

\subsection{Preliminary}

\paragraph{Problem formulation.}
We define the in-context compositional learning task as learning a function purely from demonstrations provided within a context window. Inspired by the RAVEN dataset~\cite{zhang2019raven}, we consider a setting where the model is given $L-1$ structured example (the \emph{context}) and predicts the $L$\textsuperscript{th} one (the \emph{target}). We illustrate this task in Figure~\ref{fig:main}.

Assume each example $x_i \in \mathcal{X}$ is governed by a latent compositional rule $\mathcal{R}$.
Let the \textbf{context set} be:
\begin{equation}
    \mathcal{C} = \{x_1, x_2, \ldots, x_{L-1}\} \subset \mathcal{X} ^{L-1}.
\end{equation}
The model must produce $\hat{x}_L \in \mathcal{X}$ such that $\hat{x}_L = f(\mathcal{C}),$
where $f$ is a learned model conditioned on the context $\mathcal{C}$.
The goal is to minimize the expected error over a distribution of tasks:
\begin{equation}
    \min_f \ \mathbb{E}_{\mathcal{C}, x_L} \left[ \ell\left(f(\mathcal{C}), x_L\right) \right],
\end{equation}
where $\ell$ is a task-specific loss function.
To emphasize in-context compositional learning, the tasks in the distribution $\mathcal{D}$ are constructed such that: (1) Each rule $\mathcal{R}$ is composed from a finite set of primitive operations $\mathcal{P}$. (2) Test-time tasks involve novel combinations of primitives not seen during training, \textit{i.e.}, $\mathcal{R}_{\text{test}} \notin \mathrm{span}(\mathcal{R}_{\text{train}})$.
This setting evaluates the model’s ability to infer latent rules purely from examples and apply them to unseen inputs in a compositional manner, mimicking human inductive reasoning in Raven’s Progressive Matrices.

\paragraph{Sparse coding.}

Sparse coding represents signals using linear combinations of an overcomplete dictionary $\mathbf{D} \in \mathbb{R}^{m \times d}$ (where $m > d$ is the number of atoms), and representing the signal as:
\begin{equation}
    \mathbf{X} \approx \mathbf{S} \mathbf{D},
\end{equation}
where $\mathbf{X} \in \mathbb{R}^{N \times d}$ is the input signal, $\mathbf{S} \in \mathbb{R}^{N \times m}$ is the sparse coefficient vector, which has only a few nonzero elements. To achieve sparsity, the common usage is the soft-thresholding function:
\begin{equation}
    \text{prox} (\mathbf{S}) = \text{sign}(\mathbf{S}) \odot \max(|\mathbf{S}| - \xi, 0),
\end{equation}
where $\odot$ is Hadamard product. It encourages sparsity by shrinking small values of $\mathbf{S}$ toward zero. 
Sparse coding is widely used in signal processing, machine learning, and neuroscience, providing efficient and interpretable representations of data.

\begin{figure}[t]
\vspace{-8pt}
  \centering
  \hfill
 \begin{subfigure}[b]{0.25\textwidth}
     \centering
     \includegraphics[trim={0pt 0pt 0pt 0pt}, clip, width=\textwidth]{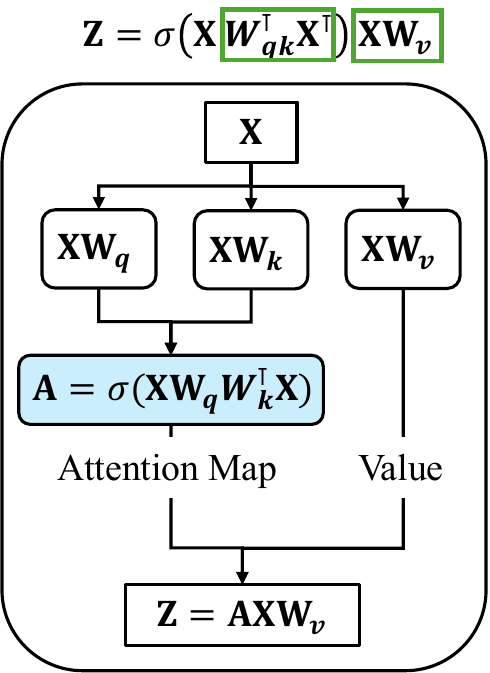}
     \caption{}
 \end{subfigure}
 \hfill
 \begin{subfigure}[b]{0.25\textwidth}
     \centering
     \includegraphics[trim={0pt 0pt 0pt 0pt}, clip, width=\textwidth]{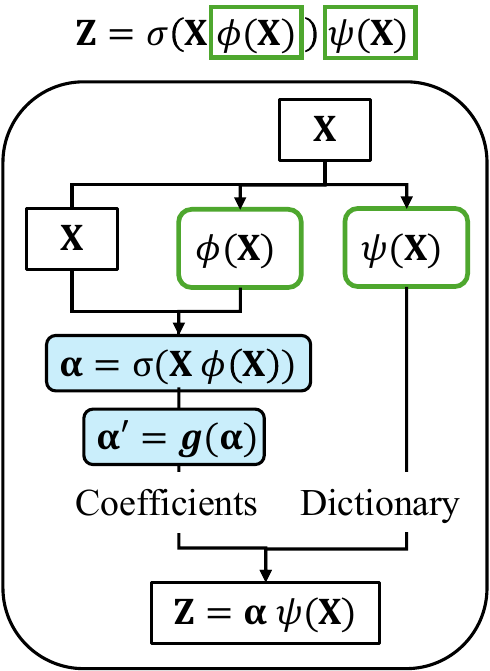}
     \caption{}
 \end{subfigure}
 \hfill
 \begin{subfigure}[b]{0.4\textwidth}
     \centering
     \includegraphics[trim={0pt 30pt 0pt 0pt}, clip, width=\textwidth]{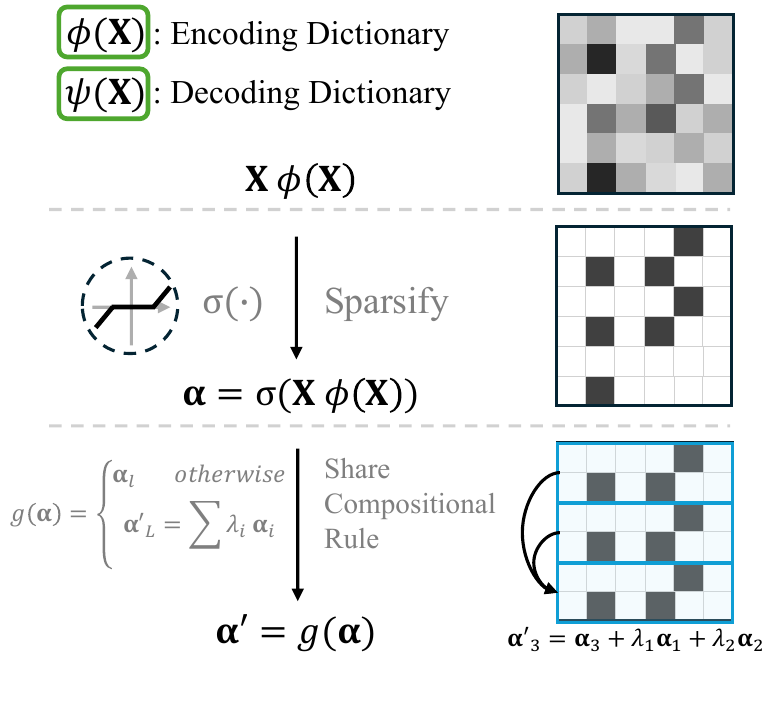}
     \caption{}
 \end{subfigure}
 
  \caption{(a) The attention block produces the output as a linear combination of the value matrix, weighted by the attention map. (b) Our framework reformulates the attention mechanism: Outputs are constructed as sparse combinations of learned dictionary atoms, \textit{i.e.}, \textit{decoding dictionary} $\psi(\inp)$, and their coefficients $\coeff$ represent compositional rules. (c) Details of our method: The coefficients $\coeff$ are obtained by decomposing the input features over the \textit{encoding dictionary} $\phi(\inp)$, and then achieving sparse representations with a nonlinear function $\sigma(\cdot)$. 
  Since the coefficients of the target task only provide partial information about its compositional rule due to limited observations, we propose to estimate the coefficients of the target task $\coeff_L$ as a simple linear combination of the context task coefficients, \textit{i.e.}, $\coeff'=g(\coeff)$. Further details are provided in Section~\ref{subsec:method}.}
  \label{fig:method}
\end{figure}

\subsection{Revisiting Transformer Blocks}

\paragraph{Multi-head attention (MHA).}
The attention layer~\cite{vaswani2017attention} transforms the input sequence $\inp \in\mathbb{R}^{N\times d}$ to the output sequence $\out \in\mathbb{R}^{N\times d}$, where $N$ denotes the sequence length, $d$ is the dimension of input and output features. 
The attention layer projects the input 
using the corresponding projection matrices $\Wq, \Wk, \Wv \in \mathbb{R}^{d\times d}$, and calculates the attention map,
\begin{equation}
\begin{split}
    \attn = \texttt{ATTN}(\inp) &=\sigma(\inp \Wq \Wk^{\intercal} \inp^{\intercal}),
\end{split}
\label{eq:attn_map}
\end{equation}
where $\sigma (\cdot)=\texttt{softmax}(\cdot)$.
The attention map $\attn \in \mathbb{R}^{N \times N}$ captures the token-wise relationship by doing inner-product in a space transformed by $\Wq, \Wk$.

Multi-head attention extends this by allowing multiple attention mechanisms to work in parallel, with each head independently learning attention patterns. 
For $H$ attention heads, each attention head calculates attention maps as $\attn^{(h)} = \sigma(\inp \: \Wqk^{(h)} \inp^{\intercal})$, 
where $\Wqk^{(h)}=\Wq^{(h)} \Wk^{(h)\intercal}$ is corresponding to projection matrices $\Wq^{(h)}, \Wk^{(h)} \in \mathbb{R}^{d\times \frac{d}{H}}$, 
$h=1,\cdots, H$.
The multi-head attention represents as,
\begin{equation}
\begin{split}
    \texttt{MHA}(\inp) & = \sum_{h=1}^{H} \attn^{(h)} \inp \Wvo^{(h)} = \sum_{h=1}^{H} \sigma(\inp \: \Wqk^{(h)} \inp^{\intercal}) \: \inp \Wvo^{(h)},
\label{eq:multi_head_attn}
\end{split}
\end{equation}
where $\Wvo^{(h)}=\Wv^{(h)} \Wo^{(h)\intercal}$, $\Wv^{(h)}, \Wo^{(h)} \in \mathbb{R}^{d\times \frac{d}{H} }$, $\Wvo^{(h)} \in \mathbb{R}^{d \times d}$.


While the MHA offers a form of learned localization via query-key similarity, it suffers from two fundamental limitations in compositional tasks:
\begin{itemize}
    \item The use of the \texttt{softmax} function produces dense attention weights, resulting in indiscriminate global mixing of information. This lack of sparsity hinders the model from representing the compositional structure inherent in contextual tasks.
    \item There is no explicit mechanism for reusing local compositional rules. It struggles to disentangle meaningful subcomponents, limiting its capacity to generalize via transferring compositional rules.
\end{itemize}



\subsection{Reformulate Transformer Using Sparse Coding}
\label{subsec:method}

We propose to explicitly reinterpret the attention in the Transformer as a form of learned, sparse coding problem.
We factorize MHA (\ref{eq:multi_head_attn}) as follows:
\begin{equation}
\begin{split}
    \texttt{MHA}(\inp) & = \sum_{h=1}^{H} \sigma(\inp \: \underbrace{\Wqk^{(h)} \inp^{\intercal}}_{}) \: \underbrace{\inp \Wvo^{(h)}} \\
    & = \sum_{h=1}^{H} \sigma(\inp \: \underbrace{\phi^{(h)}(\inp)}_{\text{Encoding dictionary}}) \: \underbrace{\psi^{(h)}(\inp)}_{\text{Decoding dictionary}},
\label{eq:mha}
\end{split}
\end{equation}
where $\phi^{(h)}(\inp)$ and $\psi^{(h)}(\inp)$ generate a set of dictionary atoms conditioned on the input $\inp$, $\phi^{(h)}(\cdot)$ and $\psi^{(h)}(\cdot)$ are the basis functions parameterized by $\Wqk^{(h)}$ and $\Wvo^{(h)}$. Our method is illustrated in Figure~\ref{fig:method}.

\vspace{-3pt}
\paragraph{Learned dictionary atoms.}
Our method reformulates the attention mechanism as a composition over learned dictionary atoms to enable structured representations. Specifically, we introduce two sets of input-dependent dictionaries: $\phi(\inp)$ and $\psi(\inp)$, both parameterized by learnable functions of the input $\inp$. 
\begin{itemize}
    \item The encoding dictionary $\phi(\inp)$ is used to extract coefficients by computing the product $\inp \: \phi(\inp)$, which represents how the input $\inp$ decomposed with respect to the learned dictionary atoms. These coefficients encode the combination rule underlying the input structure.
    \item The decoding dictionary $\psi(\inp)$ serves as a reconstruction dictionary that synthesizes the final output from the coefficients. 
\end{itemize}
Both $\phi(\inp)$ and $\psi(\inp)$ are dynamic and data-dependent, allowing the model to adaptively learn dictionary atoms that best represent the compositional patterns in each input instance.

\vspace{-3pt}
\paragraph{Sparse coefficients.}
The coefficients $\inp \: \phi(\inp)$ encode the combination rule underlying the input structure.
To enhance the model's capability to capture compositional structure, we apply sparsity-promoting nonlinearities $\sigma (\cdot)$, such as \textit{soft-thresholding}, defined as $\text{prox} (x)=\text{sign}(x) \odot \max(|x| - \xi, 0)$
to introduce sparsity in coefficients $\coeff$, where $\xi$ is the threshold for setting values to zero, \textit{i.e.},
\begin{equation}
    \coeff = \sigma \left(\inp \: \phi(\inp) \right).
\end{equation}
Different from the attention map $\attn$, which applies \texttt{softmax} operation, sparse coefficients preserve the most informative components while suppressing redundant interactions. The resulting representation is more structured and better aligned with the underlying compositional rules.

\vspace{-3pt}
\paragraph{Update coefficients of the target task.}
By encoding the underlying compositional rule as sparse coefficients $\coeff$, we aim to transfer this rule from context tasks to the target task. 
The coefficients of the target task encode only partial information about its compositional rule due to limited observations of itself. We can transfer the compositional rule to the target task by coefficient transfer.

To address this, we propose estimating the target coefficients based on those of the context tasks.
Inspired by the lifting scheme~\cite{sweldens1998lifting}, we devise a procedure that predicts the target task coefficients through a linear combination of the context task coefficients. Specifically, sparse coefficients $\coeff \in \mathbb{R}^{N \times N}$ consist of contributions from both context and target tasks, with $L-1$ portions derived from the context tasks and a single portion $\coeff_L \in \mathbb{R}^{\frac{N}{L} \times N}$ corresponding to the target task.
We can update the coefficients of target tasks $\coeff_L$ by,
\begin{equation}
    \coeff_L \leftarrow \coeff_L + \sum_{i=1}^{L-1} \lambda_i \coeff_i,
\label{eq:est_coeff}
\end{equation}
where $\lambda_i$ is the learnable parameter for combining the coefficients of context tasks. The coefficients of context tasks remain unchanged. We represent this operation with a function $g(\cdot)$,
\begin{equation}
    g(\coeff) = 
    \begin{cases}
        \coeff_i & \text{context tasks}, \\
        \coeff_L + \sum_{i=1}^{L-1} \lambda_i \coeff_i & \text{target task} .
    \end{cases}
\end{equation}
This method is parameter-efficient. At each layer, there are $L-1$ learnable parameters $\lambda_i$ corresponding to the number of context tasks, which remains relatively small compared to the overall parameter count of the Transformer blocks.

\vspace{-3pt}
\paragraph{Variation of basis functions.}
This formulation allows us to explore various designs for the basis functions $\phi(\cdot)$ and $\psi(\cdot)$ to modulate the expressiveness of the model. A detailed discussion is provided in Appendix~\ref{app:result}.




\begin{figure}[t]
\vspace{-8pt}
  \centering
 \begin{subfigure}[b]{0.48\textwidth}
     \centering
     \includegraphics[trim={0pt 0pt 0pt 0pt}, clip, width=\textwidth]{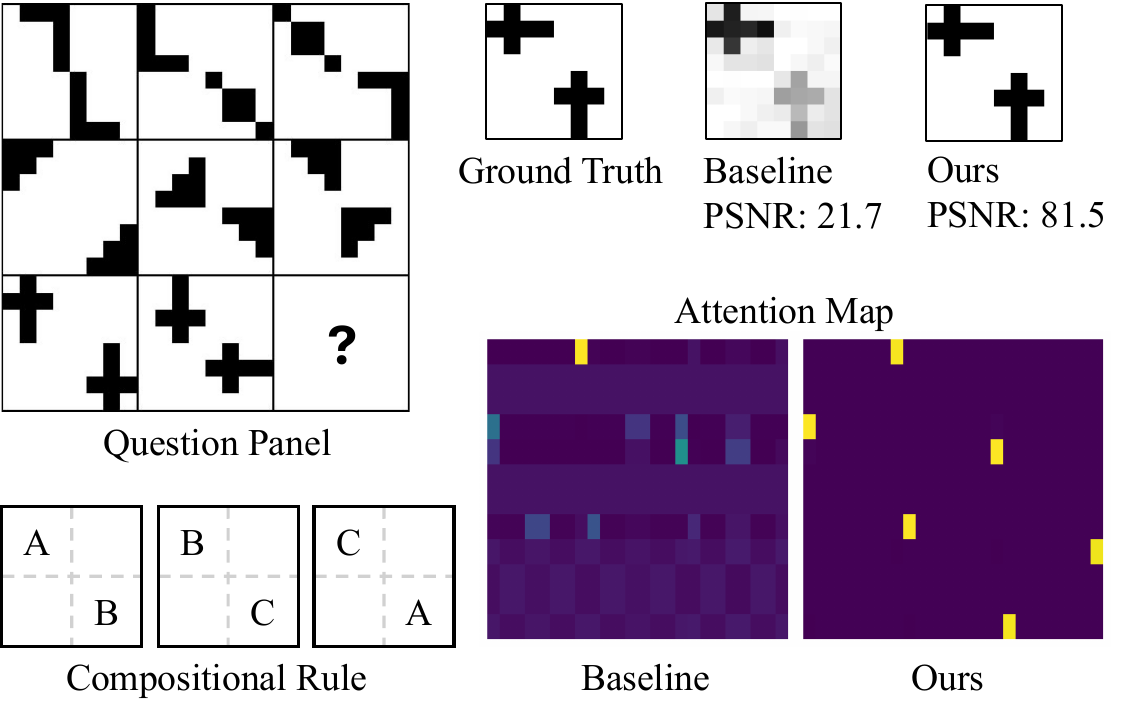}
     \caption{}
 \end{subfigure}
 \hfill
 \begin{subfigure}[b]{0.48\textwidth}
     \centering
     \includegraphics[trim={0pt 0pt 0pt 0pt}, clip, width=\textwidth]{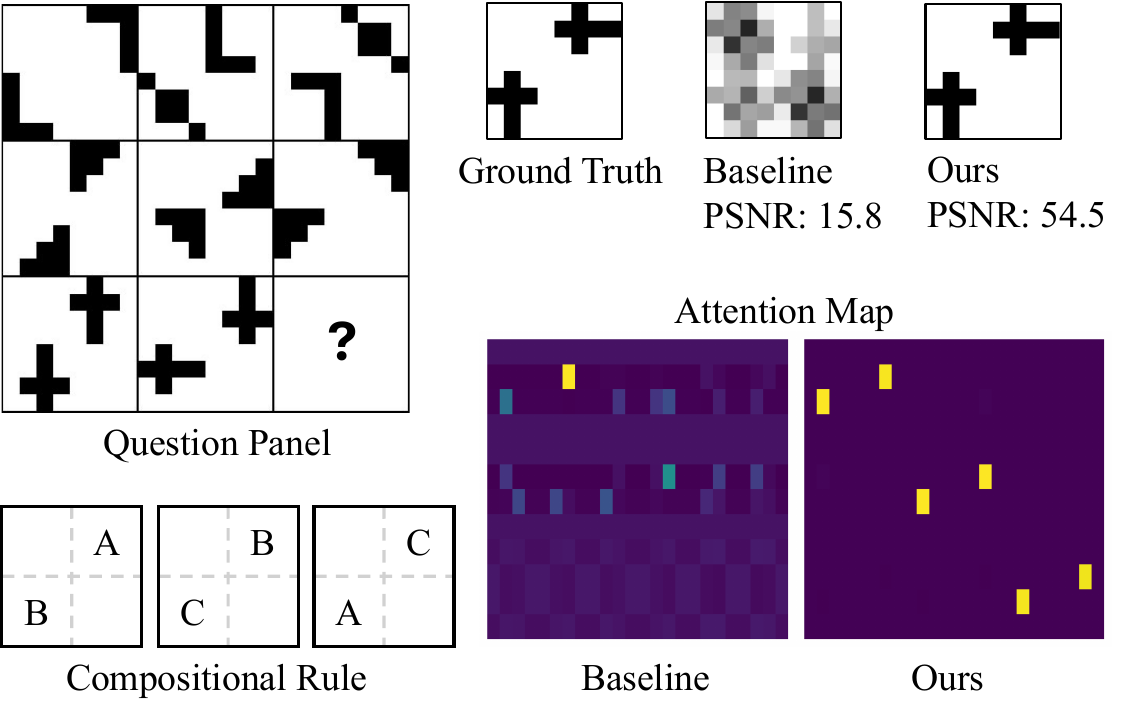}
     \caption{}
 \end{subfigure}
 
  \caption{The effectiveness of sparse coefficients (attention map). Models are trained on setting (a) and tested on both setting (a) and novel setting (b), which has a different compositional rule. 
  The baseline method, Transformer with standard MHA, produces blurry outputs due to dense coefficients, which lead to mixed and entangled results. In contrast, our sparse coefficients prevent this blurring and effectively transfer the construction rule from the context tasks to the target task. Further details are in Section~\ref{sec:alys}.}
  \label{fig:toy_exp}
\vspace{-12pt}
\end{figure}

\vspace{-3pt}
\section{Discussion}
\vspace{-3pt}
\label{sec:alys}

We construct a synthetic dataset designed to evaluate in-context compositional learning. Each input consists of 9 panels. The panels are grouped such that panels 1–3, 4–6, and 7–9 share the same underlying compositional rule, as illustrated in Figure~\ref{fig:toy_exp}. The first two rows represent the context tasks, while the last row is the target task, which the model predicts based on the pattern observed in the first two examples.

\vspace{-3pt}
\paragraph{Compositional rules.}
Each panel is an $8 \times 8$ binary image composed of four smaller basic shapes, arranged according to a predefined rule. 
For example, in Figure~\ref{fig:toy_exp} (a), every three panels are composed of 3 basic shapes. Denoting these shapes as $A, B, C$, and use $\emptyset$ to represent an empty position, the panels are arranged from left to right and top to bottom as follows: $(A, \emptyset, \emptyset, B)$, $(B, \emptyset, \emptyset, C)$, and $(C, \emptyset, \emptyset, A)$. Similarly, the compositional rule of Figure~\ref{fig:toy_exp} (b) is: $(\emptyset, A, B, \emptyset)$, $(\emptyset, B, C, \emptyset)$, and $(\emptyset, C, A, \emptyset)$.
The basic shapes are chosen from a set of 16 elements, allowing for about $P(16, 9)=\frac{16!}{(16-9)!} \approx 4 \times 10^9$ distinct panel configurations. Details of the experimental setting are described in Appendix~\ref{app:exp}.

\vspace{-3pt}
\paragraph{Learning configures.}
A single-layer Transformer block, containing only an attention layer, is trained to predict the target panel given the 8 context panels. 
The model is trained with a mean squared error (MSE) loss. The target panel is masked in the input, and the model is optimized to reconstruct it from the context examples. 
During training, the model is exposed to data generated under one compositional rule and evaluated on test data generated under a different rule to assess compositional generalization.

\subsection{Effectiveness of Sparse Coefficients}

We compare our approach with the baseline, where our method introduces sparsity in the coefficients. 
As shown in Figure~\ref{fig:toy_exp}, the baseline model with dense attention fails to predict the target panel on the test set and produces only blurry predictions on the training data. 
In contrast, through sparse attention and coefficient transfer, our method effectively infers and applies compositional rules to accurately predict the target panel on both training and test data, as illustrated in Figure~\ref{fig:toy_exp} (a) and (b). 




\subsection{Effectiveness of Coefficient Transfer}

By representing an input $\inp$ as $\left[\inp_1, \cdots, \inp_L \right]^{\intercal}$, where $\inp_i, \forall i=1,\cdots,L-1$ and $\inp_L \in \mathbb{R}^{\frac{N}{L}\times d}$ are corresponding to context tasks and the target task, we have output according to (\ref{eq:mha}),
\begin{equation}
    \begin{bmatrix}
    \feat_1 \\
    \vdots \\
    \feat_L
    \end{bmatrix} = 
    \begin{bmatrix}
        \sigma(\inp_1 \: \phi(\inp)) \: \psi(\inp) \\
        \vdots \\
        \sigma(\inp_L \: \phi(\inp)) \: \psi(\inp)
    \end{bmatrix} = 
    \begin{bmatrix}
        \coeff_1 \: \psi(\inp) \\
        \vdots \\
        \coeff_L \: \psi(\inp)
    \end{bmatrix}.
\end{equation}
We set $\inp_L=\mathbf{0}$, where $\mathbf{0} \in \mathbb{R}^{\frac{N}{L}\times d}$ is a matrix with all zeros, since no observation for the target task. 

\vspace{-3pt}
\paragraph{Baseline methods.}
A standard Transformer with $\sigma(\cdot)=\texttt{softmax}(\cdot)$, produces coefficients
$\coeff_L = \sigma(\inp_L \phi(\inp))=\texttt{softmax}(\mathbf{0})=\frac{1}{N} \mathbf{1}$,
where $\mathbf{1} \in \mathbb{R}^{\frac{N}{L}\times d}$ is a matrix with all ones. It leads to the output of the target task as 
\begin{equation}
    \feat_L = \frac{1}{N}\mathbf{1} \psi(\inp) = (\frac{1}{N}\mathbf{1} \inp) \Wv,
\end{equation}
where $\frac{1}{N}\mathbf{1} \inp$ is an average of the input. Estimating the output $\feat_L$ by simply averaging the inputs results in a blurry output, as illustrated in Figure~\ref{fig:toy_exp}.

\vspace{-3pt}
\paragraph{Our method.}
Different from standard Transformer, our method enforces sparsity in coefficients by applying $\sigma(\cdot)=\text{prox}(\cdot)$ to obtain $\coeff_L = \sigma(\inp_L \phi(\inp))=\text{prox}(\mathbf{0})=\mathbf{0}$, which produces 
\begin{equation}
    \feat_L = \coeff_L \: \psi(\inp)= \mathbf{0}.
\end{equation}
This indicates that no estimation of the target output is made when there is no observation of the input. 
However, with the coefficient estimation (\ref{eq:est_coeff}), $\coeff_L \leftarrow \coeff_L + \sum_{i=1}^{L-1} \lambda_i \coeff_i$, we avoid a zero estimation of the target coefficients by linearly combining the coefficients of the context tasks, and produce nonzero output,
\begin{equation}
    \feat_L = \coeff_L \: \psi(\inp) + \sum_{i=1}^{L-1} \lambda_i \coeff_i \: \psi(\inp).
\end{equation}
Without coefficient estimation, neither standard Transformer nor our method yields informative outputs for $\feat_L$. However, by learning $\lambda_i$ and leveraging the accurate reconstruction of context examples by $\feat_i, \forall i=1,\cdots,L-1$, $\feat_L = \coeff_L \: \psi(\inp) + \sum_{i=1}^{L-1} \lambda_i \coeff_i \: \psi(\inp)$ is capable to generate meaningful outputs that reuse compositional rules from the context tasks.
In practice, we observe the sharp and recurrent attention patterns produced by our method, as shown in Figure~\ref{fig:toy_exp}. We provide details of our analysis in Appendix~\ref{app:alys}.






\section{Experiments}
\label{sec:exp}

\begin{figure}
\vspace{-8pt}
    \begin{minipage}{0.64\linewidth}
        \centering
        \scalebox{.63}{
        \begin{tabular}{c|ccc|ccc}
        \toprule
        Layers      & \multicolumn{3}{c|}{4 layers} & \multicolumn{3}{c}{8 layers} \\ 
        Training Tasks& 10M      & 20M      & 40M     & 10M      & 20M      & 40M       \\ \midrule
        Transformer &     51.6$\pm$ 1.3     &   55.7$\pm$ 1.5       &    58.1$\pm$ 1.4     &    59.8$\pm$ 1.4      &    63.3$\pm$ 1.9      &   65.1$\pm$ 4.3      \\
        HYLA~\cite{schug2024attention}        &     55$\pm$ 2.1     &     68.6$\pm$ 1.5     &    73.2$\pm$ 0.6     &     72.5 $\pm$ 6.6    &     77.1 $\pm$ 3.4    &   79.3 $\pm$ 1.8    \\ \midrule
        Ours        &    \textbf{63.1}$\pm$ 2.8      &     \textbf{73.9}$\pm$ 3.8     &     \textbf{76.3} $\pm$ 2.1   &     \textbf{72.6} $\pm$ 3.9    &     \textbf{78.2} $\pm$ 3.9    &     \textbf{82.7} $\pm$ 2.5   \\ 
        \bottomrule
        \end{tabular}
        }
    \end{minipage}
    \begin{minipage}{0.35\linewidth}
        \begin{subfigure}[b]{\textwidth}
         \centering
         \includegraphics[trim={0pt 0pt 0pt 0pt}, clip, width=\textwidth]{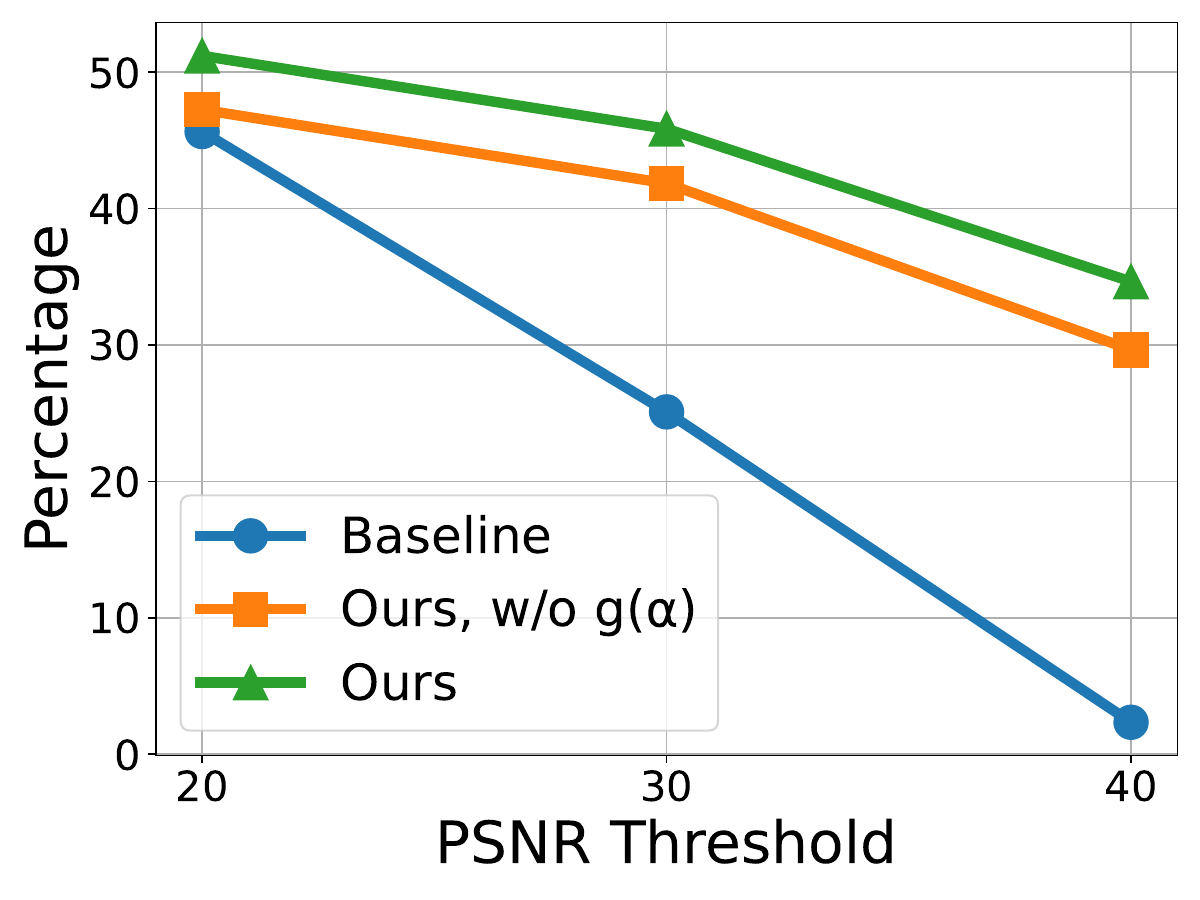}
     \end{subfigure}
    \end{minipage}
    \caption{
    (Table) Accuracy comparison between our method and baseline methods on the Symbolic RAVEN (\textbf{S}-RAVEN) dataset. Our method consistently achieves higher accuracy than baselines.  \\
    (Plot) Results on the \textbf{RAVEN} dataset. It shows the percentage of test samples with PSNR values exceeding a given threshold. At lower PSNR levels, the baseline method performs similarly to ours. However, for PSNR values above 40, the baseline achieves nearly \textbf{0} coverage, whereas our method retains over 30\% of the samples.
    }
\label{exp:sraven}
\end{figure}

\subsection{Symbolic RAVEN}
The S-RAVEN dataset~\cite{schug2024attention}, detailed in the original paper, is specifically designed to evaluate compositional reasoning. 
In S-RAVEN, each task is built from a finite set of rule combinations systematically applied across panels, with each panel represented as a tuple of integers that symbolically encodes its features. Details of the experimental setting are described in Appendix~\ref{app:exp}.

\paragraph{Experimental settings.}
We train models using the standard decoder-only Transformer architecture and evaluate performance under varying numbers of training tasks and attention layers. Our method builds directly on the S-RAVEN implementation, introducing sparsity into the attention maps and applying a lifting scheme to enhance compositional rule transfer within the attention mechanism.

\paragraph{Metrics.}
To evaluate whether a model trained on a subset of rule combinations can generalize to unseen combinations, we partition all possible rule combinations into separate training and test sets, where $25\%$ of the combinations are held out for testing. Model performance is assessed by measuring the accuracy of correctly predicted examples from the test set.

\paragraph{Results.}
The results, summarized in Table~\ref{exp:sraven}, are obtained by running the experiment three times.
Our method consistently outperforms baseline approaches, including standard Transformer~\cite{vaswani2017attention} and HYLA~\cite{schug2024attention}, achieving significantly higher accuracy even with fewer layers.



\begin{figure}[t]
\vspace{-8pt}
  \centering
 \hfill
 \begin{subfigure}[b]{0.4\textwidth}
     \centering
     \includegraphics[trim={0pt 0pt 0pt 0pt}, clip, width=\textwidth]{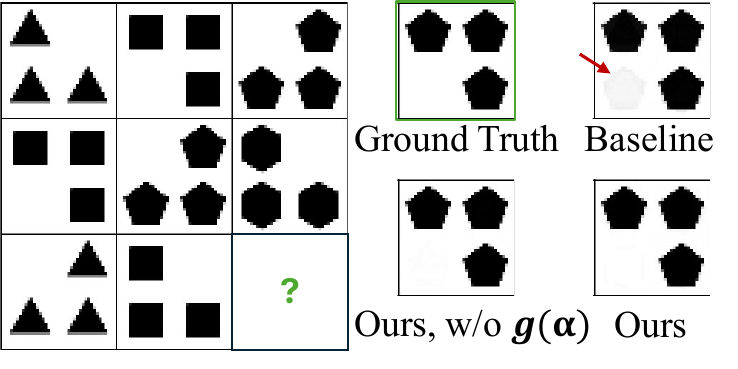}
     \caption{}
 \end{subfigure}
 \hfill
 \begin{subfigure}[b]{0.4\textwidth}
     \centering
     \includegraphics[trim={0pt 0pt 0pt 0pt}, clip, width=\textwidth]{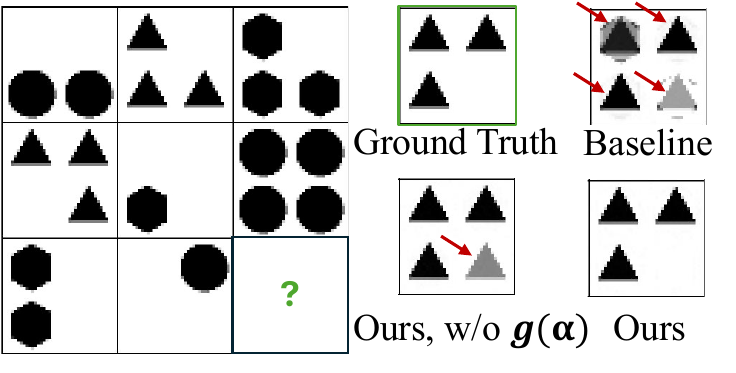}
     \caption{}
 \end{subfigure}
 \hfill

 \hfill
 \begin{subfigure}[b]{0.4\textwidth}
     \centering
     \includegraphics[trim={0pt 0pt 0pt 0pt}, clip, width=\textwidth]{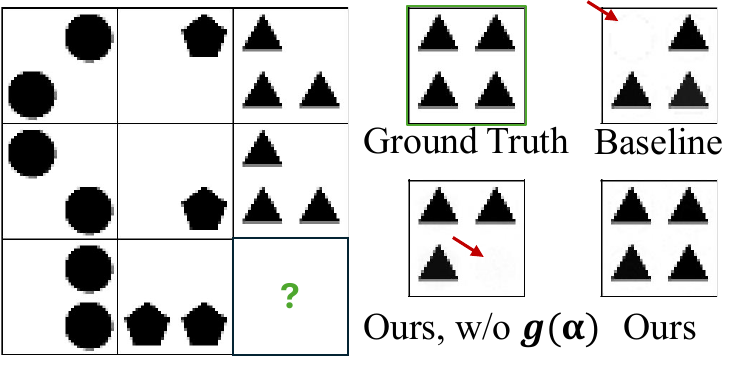}
     \caption{}
 \end{subfigure}
 \hfill
 \begin{subfigure}[b]{0.4\textwidth}
     \centering
     \includegraphics[trim={0pt 0pt 0pt 0pt}, clip, width=\textwidth]{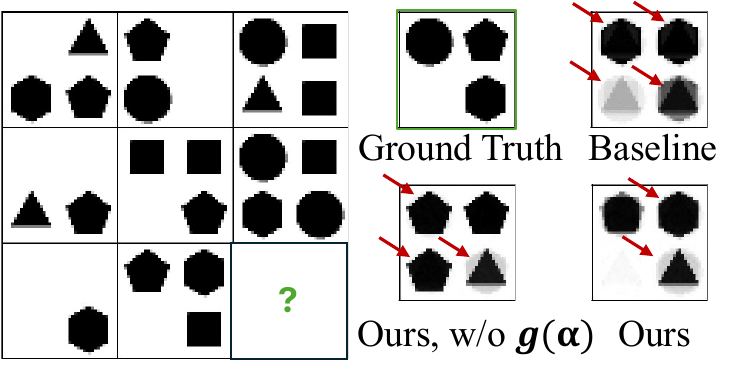}
     \caption{}
 \end{subfigure}
 \hfill
 
  \caption{Example results of RAVEN. The model predicts the 9th panel based on the first 8 panels. We compare our method with and without $g(\coeff)$, the coefficient estimation for the target task, alongside the baseline method. 
  The baseline often yields blurry images with incorrect layouts, whereas our method preserves structure and improves compositional accuracy. However, all models occasionally fail on the most challenging cases, \textit{e.g.}, (d).}
  \label{fig:raven_exp}
\vspace{-8pt}
\end{figure}

\subsection{RAVEN dataset}
The RAVEN dataset~\cite{zhang2019raven} was originally designed for visual reasoning, requiring models to select the correct answer from eight candidates based on the underlying structure of context panels. In contrast, our work focuses on evaluating the compositional capabilities of models by tasking them with generalizing the answer directly, rather than selecting from predefined options—a more challenging objective that demands better understanding and application of the compositional rule. 

\paragraph{Experimental settings.}
For our experiments, we adapt the rule framework from RAVEN and focus on the simplified case where examples are arranged in a $2\times 2$ grid. The model generates the target answer based on the composition of the eight context images. We modify the standard Transformer architecture to serve as a baseline and compare its performance against our approach, which incorporates sparsity and estimation of the coefficients into the attention mechanism.

\paragraph{Metrics.}
To assess model performance, we adapt the Peak Signal-to-Noise Ratio (PSNR) metric to quantify the difference between the generated images and the ground truth. We report the percentage of test samples exceeding PSNR thresholds of 20, 30, and 40, where a higher percentage indicates better reconstruction quality and overall model performance.

\paragraph{Results.}
As shown in Figure~\ref{exp:sraven}, the results demonstrate that our method consistently achieves higher accuracy than the standard Transformer baseline. While the standard Transformer yields nearly 0\% of test samples with PSNR above 40, our method maintains around 40\%. Additionally, we observe further performance gains when the target coefficient estimation is applied. 

Example predictions are visualized in Figure~\ref{fig:raven_exp}, where the baseline model frequently produces blurry images with incorrect arrangements, while our method preserves clear structural information and generates more accurate compositions. Nevertheless, in particularly challenging cases, all models occasionally fail to produce satisfactory outputs.

\section{Related Work}
\label{sec:related_work}

\paragraph{In-context compositional learning.}
Recent research on compositional reasoning with transformers has explored several key directions. Some studies focus on understanding and measuring compositional generalization abilities, often identifying gaps between LLM performance on known components and novel compositions, and how these gaps evolve with model scale or in-context learning~\cite{dziri2023faith, hosseini2022compositional, li2025llms, liu2025openrubrics, liu2025roserag, press2022measuring, srivastava2022beyond, wang2024blendfilter}. Other works delve into the underlying mechanisms and offer explanations for how LLMs achieve or fail at compositional reasoning, for example, by proposing that attention acts as a hypernetwork or by analyzing emergent algorithmic behaviors~\cite{olsson2022context, saparov2022language, schug2024attention, todd2023function}. Another line of inquiry compares the effectiveness of general pre-training against specialized architectures, investigating whether broad pre-training itself can endow models with strong compositional capabilities, sometimes rivaling or exceeding those of systems explicitly designed for such tasks~\cite{akyurek2022learning, furrer2020compositional}. However, conventional Transformers often struggle with in-context compositional tasks due to insufficient structural inductive bias. We address this limitation by introducing a sparse coding attention, explicitly designed to capture and transfer structural rules from context examples.

\paragraph{Sparsity in attention.}
Sparsity has proven to be a powerful principle, and extensive research has investigated its application in Transformers, primarily to reduce the computational complexity~\cite{tay2022efficient} of the attention mechanism.
Various studies are built upon the attention mechanism~\cite{chen2025zero, kim2024learning}. Sparse attention methods, in particular, aim to reduce the number of token pairs involved in the attention computation.
This includes methods employing fixed, pre-defined sparsity patterns, such as local windowed attention combined with varying forms of global or random attention~\cite{ainslie2020etc, beltagy2020longformer, zaheer2020big}. Learnable or adaptive sparsity patterns have been explored, where the attention pattern is dynamically determined, for instance, through locality-sensitive hashing~\cite{kitaev2020reformer} or learned routing strategies~\cite{roy2021efficient}. Some approaches seek to approximate full attention using kernel methods or low-rank projections, which implicitly reduce computational load without explicit sparse connections~\cite{choromanski2020rethinking, wang2020linformer}. 
In contrast to prior work, our approach introduces sparsity in attention mainly to enhance the representation of compositional rules. By replacing softmax with soft-thresholding, we promote the learning of structured, localized attention patterns that better capture and encode compositional relationships.

\section{Conclusion}

In this work, we proposed a reformulation of the Transformer architecture to address the challenge of in-context compositional learning. By drawing inspiration from sparse coding, we introduced a framework that represents compositional rules as sparse coefficients over learned dictionaries, enhancing the transferability of structure across tasks. 
By enforcing sparsity in the coefficients and estimating target coefficients from those of the context tasks, our method further enhances rule transfer and localization within the attention mechanism.
Experimental results on in-context compositional learning datasets, such as S-RAVEN and RAVEN benchmark, demonstrate that our approach significantly outperforms standard Transformers, particularly in tasks requiring compositional reasoning and generalization to unseen rule combinations. These findings highlight the potential of combining principles from sparse coding and attention to advance structured reasoning in neural models.

\paragraph{Limitations.}
While our approach shows promising results in training Transformers on relatively small-scale tasks, its application to large pre-trained models remains unexplored. Although integrating the linear combination of attention maps into pre-trained models could potentially enhance compositional learning, we leave this for future work.

\bibliographystyle{plain}
\bibliography{reference}

\clearpage
\setcounter{page}{1}

\section{Experimental Details}
\label{app:exp}

\paragraph{The experimental setting of synthetic data.}
We conduct experiments on a synthetic dataset composed of 16 distinct basic elements, which are shown in Figure~\ref{fig:supp_basic} (a), where each panel is constructed by selecting and combining two of these elements. The examples of training and test data are displayed in Figure~\ref{fig:supp_basic} (b-c) The model architecture consists of a single-layer Transformer using only self-attention, omitting the feedforward layer. The input sequence length is fixed at $N=32$, with a feature dimension of 16 and a single attention head ($H=1$). Training is performed over 200 epochs using a batch size of 128. We optimize the model using the Adam optimizer with a learning rate of 0.001 and employ mean squared error (MSE) loss as the training objective.

\begin{figure}[b]
  \centering
 \begin{subfigure}[b]{0.3\textwidth}
     \centering
     \includegraphics[trim={0pt 0pt 0pt 0pt}, clip, width=\textwidth]{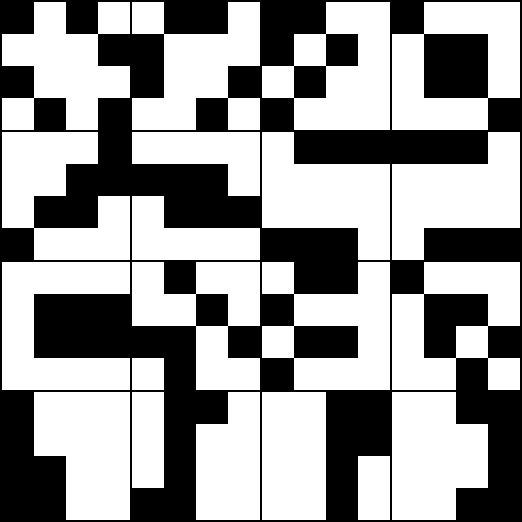}
     \caption{16 basic elements}
 \end{subfigure}
 \hfill
 \begin{subfigure}[b]{0.3\textwidth}
     \centering
     \includegraphics[trim={0pt 0pt 0pt 0pt}, clip, width=\textwidth]{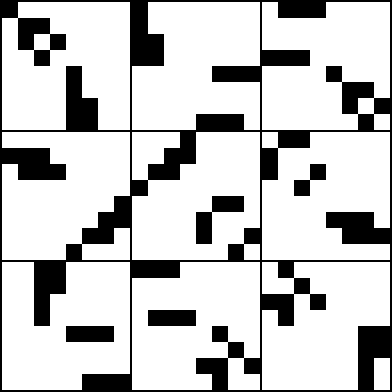}
     \caption{An example of training data}
 \end{subfigure}
 \hfill
 \begin{subfigure}[b]{0.3\textwidth}
     \centering
     \includegraphics[trim={0pt 0pt 0pt 0pt}, clip, width=\textwidth]{imgs/basic_example_1.png}
     \caption{An example of test data}
 \end{subfigure}
 
  \caption{Examples of the synthetic dataset.}
  \label{fig:supp_basic}
\end{figure}

\paragraph{The experimental setting of S-RAVEN.}
We evaluate on the S-RAVEN benchmark~\cite{schug2024attention}, where each task is defined by $4$ features, sampled from a pool of $8$ possible rules. For each task, we generate three input-output sequences of length three, using random inputs for each rule to form the context. Our model architecture follows HYLA~\cite{schug2024attention}, varying the number of layers between 4 and 8. The input has a feature dimension of 128 and 16 attention heads ($H=16$). For baseline comparisons, including HYLA and a standard Transformer, we adopt the original configurations as specified in the HYLA paper. All models use Root Mean Square (RMS) normalization for attention activations. To promote structured representations, our method applies soft thresholding to the attention weights, encouraging sparsity. Training is conducted for one epoch using a batch size of 128, the Adam optimizer with a learning rate of 0.001 and a weight decay of 0.1, and the cross-entropy loss as the objective.

\paragraph{The experimental setting of RAVEN.}
We conduct experiments on a restricted version of the RAVEN dataset~\cite{zhang2019raven}, focusing solely on the 2-by-2 grid layout. To ensure deterministic target generation, we remove stochastic variations in rotation and color, so that the target panel is uniquely determined by the eight context panels. Each image is resized to $40\times 40$ pixels. The model is a standard Transformer with 4 layers, a sequence length of $N=36$, a feature dimension of 512, and 16 attention heads ($H=16$). Training is performed over 2000 epochs with a batch size of 256, using the Adam optimizer with a learning rate of 0.0001. The model is trained to minimize mean squared error (MSE) loss.

\section{Experimental Results}
\label{app:result}

\paragraph{Sparsity and threshold}
To investigate the impact of the threshold on attention sparsity, we conduct experiments on the RAVEN dataset. Specifically, we measure the average sparsity of the attention maps across all layers, where sparsity is defined as the proportion of zero-valued entries after thresholding. As the threshold increases, more small-magnitude values are suppressed, leading to higher sparsity levels. Our results confirm this trend: larger thresholds consistently yield sparser attention maps, demonstrating the controllable nature of sparsity in our model through the threshold parameter. The results are shown in Table~\ref{tab:supp_sparsity}.

\begin{table}[]
\caption{The effect of threshold on the sparsity of the attention map.}
\centering
\begin{tabular}{c|ccccc}
\toprule
Threshold ($\xi$) & 0.003 & 0.01  & 0.03  & 0.1   & 0.3   \\ \midrule
Sparsity  & 18.53 & 57.82 & 90.45 & 97.82 & 99.38 \\ \bottomrule
\end{tabular}
\label{tab:supp_sparsity}
\end{table}

\paragraph{Variation of basis functions.}
With the above formulation, we explore different designs for the basis functions $\phi(\cdot)$ and $\psi(\cdot)$ to adjust the expressiveness of models. In the baseline configuration, the basis functions are constructed through linear projections of the input, parameterized by $\Wqk^{(h)}$ or $\Wvo^{(h)}$. 
A simple variation is to introduce nonlinearity into the basis construction by applying an activation function, such as $\texttt{ReLU}$, after the linear projections. For instance, a different basis function can be redefined as $\phi(\inp)=\texttt{ReLU}(\Wqk^{(h)} \inp)$ or $\psi(\inp)=\texttt{ReLU}(\Wvo^{(h)} \inp)$. Incorporating nonlinearity into the basis functions can increase the representational capacity, enabling the model to capture more complex localized patterns beyond those achievable with purely linear projections. 

We conduct experiments on the S-RAVEN dataset using a 4-layer Transformer architecture, training the model on a dataset of 20 million samples. We compare the different designs of $\phi(\inp)$ and $\psi(\inp)$ by adding the $\texttt{ReLU}$. The results are shown in Table~\ref{tab:supp_basis_fn}.

\begin{table}[]
\caption{Variation of basis functions.}
\centering
\scalebox{0.7}{
\begin{tabular}{c|cccc}
\toprule
Configs of $\phi(\inp)$ and $\psi(\inp)$  & $\Wqk^{(h)} \inp$, $\Wvo^{(h)} \inp$    & $\texttt{ReLU}(\Wqk^{(h)} \inp)$, $\Wvo^{(h)} \inp$    & $\Wqk^{(h)} \inp$, $\texttt{ReLU}(\Wvo^{(h)} \inp)$   & $\texttt{ReLU}(\Wqk^{(h)} \inp)$, $\texttt{ReLU}(\Wvo^{(h)} \inp)$  \\ \midrule
Accuracy & 71.7 & 72.3 & 72.9 & 73.6 \\ \bottomrule
\end{tabular}
}

\label{tab:supp_basis_fn}
\end{table}

\paragraph{Application to language modeling tasks.}
While our primary focus is to address specific limitations of attention mechanisms in compositional tasks, we have conducted experiments on language models to demonstrate our method's effectiveness on standard benchmarks.

We integrate the proposed sparse-coding inspired attention into the Llama-7B~\cite{touvron2023llama} model. We then fine-tuned these modified models on several widely-used commonsense reasoning benchmarks and compared the results against both the original base models and those fine-tuned using LoRA/DoRA~\cite{hu2022lora, liu2024dora}. This configuration is widely adopted in existing PEFT approaches~\cite{chenlarge, liu2025unlocking, miao2025coeff, miao2024training, wang2024roselora}.

In our implementation, we target the model's Attention blocks. We treat the original attention weights (denoted as $\psi(\inp)$ and $\phi(\inp)$) as fixed and introduce our core components: new, learnable parameters for sparsity ($\xi$) and the coefficient transfer mechanism ($\lambda_i$). We initialize these new parameters to zero, ensuring that our module has no impact on the model's output before training. By fine-tuning only these new parameters, we can cleanly measure the influence of our method.

Our findings show that models incorporating our model achieve a notable performance improvement over the base Llama model, which is shown in Table~\ref{tab:supp_llm}. Although these results do not yet surpass those from LoRA and DoRA fine-tuning, it's important to consider that our approach uses significantly fewer trainable parameters (over a hundred vs. over 50 million) and has not undergone extensive hyperparameter optimization. The performance gains over the base models suggest that large language models benefit from our mechanism on reasoning tasks, providing compelling evidence of its value. We believe that with further refinement, our approach has the potential to achieve better performance on language modeling tasks. We see the exploration of its application to other benchmarks, such as translation and summarization, or image generation tasks~\cite{lezama2021run, miaotuning}, as a promising direction for future work.

\begin{table}[]
\caption{Results on language modeling tasks.}
\centering
\scalebox{0.9}{
\begin{tabular}{c|c|cccccc|c}
\toprule
Model       & Params & BoolQ & PIQA & HellaSwag & WinoGrande & ARC-c & OBQA & Avg. \\ \midrule
Llama-7B    & -      & 56.5  & 79.8 & 76.1      & 70.1       & 63.2  & 77.0 & 70.5 \\
+ Ours      & 128    & 57.3  & 80.7 & 80.6      & 71.1       & 64.2  & 77.6 & 71.9 \\ \midrule
LoRA        & 55.9M  & 67.5  & 80.8 & 83.4      & 80.4       & 62.6  & 79.1 & 75.6 \\
LoRA + Ours & 55.9M  & 69.5  & 81.8 & 81.6      & 80.8       & 65.1  & 79.0 & 76.3 \\ \midrule
DoRA        & 56.6M  & 69.7  & 83.4 & 87.2      & 81.0       & 66.2  & 79.2 & 77.8 \\
DoRA + Ours & 56.6M  & 70.0  & 83.6 & 87.3      & 81.2       & 67.4  & 78.9 & 78.1 \\ \bottomrule
\end{tabular}
}
\label{tab:supp_llm}
\end{table}


\paragraph{Evaluate the models on Im-promptu benchmark.}
We evaluate our method on the Im-promptu benchmark~\cite{dedhia2023promptu}, including 3D Shapes, BitMoji Faces, and CLEVr Objects datasets. For this comparison, we adopt the Object-Centric Learner from the original paper as our baseline and integrate our approach by modifying its attention layer. As detailed in the Table~\ref{tab:supp_imp}, our method consistently achieves a lower MSE, demonstrating an improvement over the baseline.

\begin{table}[]
\caption{Results on Im-promptu benchmark.}
\centering
\begin{tabular}{c|ccc}
\toprule
MSE  & 3D Shapes & BitMoji Faces & CLEVr Objects \\ \midrule
OCL  & 4.36      & 4.77          & 37.54         \\
Ours & 4.31      & 4.42          & 36.23         \\ \bottomrule
\end{tabular}
\label{tab:supp_imp}
\end{table}

\section{Additional Analysis}
\label{app:alys}

By representing an input $\inp$ as $\left[\inp_1, \cdots, \inp_L \right]^{\intercal}$, where $\inp_i, \forall i=1,\cdots,L-1$ and $\inp_L \in \mathbb{R}^{\frac{N}{L}\times d}$ are corresponding to context tasks and the target task, we have,
\begin{equation}
    \begin{bmatrix}
    \feat_1 \\
    \vdots \\
    \feat_L
    \end{bmatrix} = 
    \begin{bmatrix}
        \sigma(\inp_1 \: \phi(\inp)) \: \psi(\inp) \\
        \vdots \\
        \sigma(\inp_L \: \phi(\inp)) \: \psi(\inp)
    \end{bmatrix} = 
    \begin{bmatrix}
        \coeff_1 \: \psi(\inp) \\
        \vdots \\
        \coeff_L \: \psi(\inp)
    \end{bmatrix}.
\end{equation}
We set $\inp_L=\mathbf{0}$, where $\mathbf{0} \in \mathbb{R}^{\frac{N}{L}\times d}$ is a matrix with all zeros, since no observation for the target task.


\paragraph{Our method.}
Different from standard Transformer, our method enforces sparsity in coefficients by applying $\sigma(\cdot)=\text{prox}(\cdot)$ to obtain $\coeff_L = \sigma(\inp_L \phi(\inp))=\text{prox}(\mathbf{0})=\mathbf{0}$, which produces 
\begin{equation}
    \feat_L = \coeff_L \: \psi(\inp)= \mathbf{0}.
\end{equation}
This indicates that no estimation of the target output is made when there is no observation of the input. 
However, with the coefficient estimation (\ref{eq:est_coeff}), $\coeff_L \leftarrow \coeff_L + \sum_{i=1}^{L-1} \lambda_i \coeff_i$, we avoid a zero estimation of the target coefficients by linearly combining the coefficients of the context tasks, and produce nonzero output,
\begin{equation}
    \feat_L = \coeff_L \: \psi(\inp) + \sum_{i=1}^{L-1} \lambda_i \coeff_i \: \psi(\inp).
\end{equation}
Without coefficient estimation, neither standard Transformer nor our method yields informative outputs for $\feat_L$. However, by learning $\lambda_i$ and leveraging the accurate reconstruction of context examples by $\feat_i, \forall i=1,\cdots,L-1$, $\feat_L = \coeff_L \: \psi(\inp) + \sum_{i=1}^{L-1} \lambda_i \coeff_i \: \psi(\inp)$ is capable to generate the target outputs that reuse compositional rules from the context tasks. 

\subsection{Compositional Reconstruction of the Target Output}
We have a dictionary of basis elements, $\psi(\inp) = \{\psi_j\}_{j=1}^{N}$. Each output $\feat_i$ for $i = 1, \dots, L$ is expressed as a linear combination of elements in $\psi(\inp) $ using coefficient vectors $\coeff_i \in \mathbb{R}^N$, i.e.,
\begin{equation}
    \feat_i = \sum_{j=1}^n \coeff_i^{(j)} \psi_j = \coeff_i^\intercal \psi,
\end{equation}
where $\psi = [\psi_1, \dots, \psi_N]^\intercal$.

\begin{assumption}
\label{assum:1}
    The dictionary $\psi(\inp)$ is \textbf{sufficient} to represent the target output $\feat_L$.
\end{assumption}

\begin{assumption}
\label{assum:2}
    Each of the $L-1$ outputs $\feat_1, \dots, \feat_{L-1}$ is correctly constructed using coefficient vectors $\coeff_1, \dots, \coeff_{L-1}$.
\end{assumption}

\begin{assumption}
\label{assum:3}
    Across $\{\feat_1, \dots, \feat_{L-1}\}$, every dictionary element $\psi_j$ is used at least once, \textit{i.e.}, $\forall j$, there exists $i$ such that $\coeff_i^{(j)} \neq 0$.
\end{assumption}

\begin{proposition}
    There exists a set of weights $\lambda_1, \dots, \lambda_{L-1}$ such that:
    \begin{equation}
        \coeff_L = \sum_{i=1}^{L-1} \lambda_i \coeff_i,
    \end{equation}
    and $\coeff_L$ reconstructs $\feat_L$ using only elements in $\psi(\inp)$.
\end{proposition}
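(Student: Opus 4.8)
The plan is to collapse the two requirements in the proposition into a single span-membership condition in the output space, and then to try to extract that condition from the three assumptions. The two clauses are coupled through the same weights $\lambda_i$, so I would not treat them separately. Instead, define $\coeff_L := \sum_{i=1}^{L-1}\lambda_i \coeff_i$ directly, and then compute the reconstruction this choice produces. By Assumption~\ref{assum:2} each context output satisfies $\feat_i = \coeff_i^\intercal \psi$, so by linearity
\begin{equation}
    \coeff_L^\intercal \psi = \sum_{i=1}^{L-1} \lambda_i\, \coeff_i^\intercal \psi = \sum_{i=1}^{L-1} \lambda_i \feat_i.
\end{equation}
Hence the second clause (``$\coeff_L$ reconstructs $\feat_L$'') is exactly the statement $\feat_L = \sum_{i=1}^{L-1}\lambda_i \feat_i$. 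This shows the whole proposition is equivalent to the single claim $\feat_L \in \mathrm{span}\{\feat_1,\dots,\feat_{L-1}\}$: as soon as weights realizing this combination exist, the vector $\coeff_L = \sum_i \lambda_i \coeff_i$ automatically satisfies both the combination identity and the reconstruction property, and the final bookkeeping is routine.

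The second step is to establish $\feat_L \in \mathrm{span}\{\feat_1,\dots,\feat_{L-1}\}$. By Assumption~\ref{assum:1}, $\feat_L$ lies in $W := \mathrm{span}\{\psi_1,\dots,\psi_N\}$, so it would suffice to prove that $\mathrm{span}\{\feat_1,\dots,\feat_{L-1}\} = W$, i.e.\ that the context outputs already span the entire atom space. Assumption~\ref{assum:3} is the natural ingredient aimed at this: it guarantees that every atom $\psi_j$ appears with a nonzero coefficient in at least one context output, so no atom needed to express $\feat_L$ is simply missing from the context. Equivalently, in coefficient space, the fiber $\{\coeff : \coeff^\intercal \psi = \feat_L\}$ is nonempty by Assumption~\ref{assum:1}, and we want it to intersect $\mathrm{span}\{\coeff_1,\dots,\coeff_{L-1}\}$; any point of that intersection furnishes the required weights.

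The hard part will be precisely this second step, and I expect it to be the genuine obstacle rather than a routine verification. Coverage of atoms (Assumption~\ref{assum:3}) does \emph{not} by itself force $\mathrm{span}\{\feat_i\}$ to contain each atom direction, because linear combinations of the context outputs can cancel: if, say, a lone context example $\feat_i = \psi_1 + \psi_2$ were the only one touching $\psi_1$ and $\psi_2$, both atoms would be ``used'' in the sense of Assumption~\ref{assum:3}, yet $\mathrm{span}\{\feat_i\}$ would be one-dimensional and could isolate neither. To close the gap rigorously I would strengthen the hypothesis so that the atoms $\feat_L$ actually relies on are individually recoverable from the context, for example by requiring $\{\coeff_1,\dots,\coeff_{L-1}\}$ to span a subspace of $\mathbb{R}^N$ whose image under the reconstruction map $\coeff \mapsto \coeff^\intercal\psi$ contains $\feat_L$ (a mild spanning condition that implies, but is not implied by, mere coverage). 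Under such a condition the reduction from the first paragraph immediately yields the weights $\lambda_i$ and the vector $\coeff_L$, completing the argument.
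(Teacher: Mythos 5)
Your first-paragraph reduction is correct, and it is essentially the paper's own skeleton transported to output space: the paper sets $V=\mathrm{span}\{\coeff_1,\dots,\coeff_{L-1}\}$, takes the vector $\coeff_L^*$ supplied by Assumption~\ref{assum:1} with $\feat_L=\coeff_L^{*\intercal}\psi$, and argues $\coeff_L^*\in V$; applying the linear reconstruction map $\coeff\mapsto\coeff^\intercal\psi$ and using Assumption~\ref{assum:2} turns that claim into exactly your condition $\feat_L\in\mathrm{span}\{\feat_1,\dots,\feat_{L-1}\}$. So the two framings are equivalent, and the entire content of the proposition sits in the step you call the hard part.

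That step is precisely where the paper's proof is weakest, and your cancellation example is a genuine counterexample to the paper's reasoning there. The paper argues: $\mathrm{supp}(\coeff_L^*)\subseteq\bigcup_i\mathrm{supp}(\coeff_i)$ (which does follow from Assumption~\ref{assum:3}), ``all such directions are already present in $V$,'' hence $\coeff_L^*\in V$. The quoted clause is exactly the inference from support coverage to span membership that you reject. Concretely: take $N=2$, $L=2$, linearly independent atoms $\psi_1,\psi_2$, a single context vector $\coeff_1=(1,1)$ (so both atoms are ``used'' and $\feat_1=\psi_1+\psi_2$), and target $\feat_L=\psi_1$, so Assumption~\ref{assum:1} holds with $\coeff_L^*=(1,0)$; all three assumptions are satisfied, yet $\lambda_1(\psi_1+\psi_2)=\psi_1$ has no solution. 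The proposition therefore does not follow from Assumptions~\ref{assum:1}--\ref{assum:3}, and your refusal to claim this step is not a weakness of your attempt but a correct diagnosis of a hole that the paper's own proof papers over by assertion. One caution about your proposed repair, however: requiring that the image of $\mathrm{span}\{\coeff_1,\dots,\coeff_{L-1}\}$ under $\coeff\mapsto\coeff^\intercal\psi$ contain $\feat_L$ is not a ``mild spanning condition''---it is verbatim the conclusion of the proposition (existence of $\lambda_i$ with $(\sum_i\lambda_i\coeff_i)^\intercal\psi=\feat_L$), so adopting it as a hypothesis makes the statement a tautology. A non-circular strengthening would have to be structural, e.g.\ that the context coefficient vectors span all of $\mathbb{R}^N$, or that the contexts have pairwise disjoint supports with $\coeff_L^*$ proportional to $\coeff_i$ on each $\mathrm{supp}(\coeff_i)$; either of these genuinely implies $\coeff_L^*\in V$ and hence the claimed conclusion.
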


\begin{proof}
    Let $\mathcal{A} = \{\coeff_1, \dots, \coeff_{L-1}\} \subset \mathbb{R}^N$ denote the set of known coefficient vectors. Let $V = \text{span}(\mathcal{A}) \subseteq \mathbb{R}^N$ be the subspace spanned by them.
    Since from Assumption~\ref{assum:2}, each $\coeff_i$ reconstructs $\feat_i$ correctly and the union of their support covers all dictionary elements, the span $V$ includes directions along all dictionary elements used for constructing $\feat_L$.
    
    From Assumption~\ref{assum:1}, we know there exists some $\coeff_L^* \in \mathbb{R}^N$ such that:
    \begin{equation}
        \feat_L = \coeff_L^{* \intercal} \psi.
    \end{equation}
    
    Because $\text{supp}(\coeff_L^*) \subseteq \bigcup_{i=1}^{L-1} \text{supp}(\coeff_i)$, \textit{i.e.}, the dictionary elements needed for $\feat_L$ have already been used in $\mathcal{A}$, and all such directions are already present in $V$, it follows that:
    \begin{equation}
        \coeff_L^* \in V.
    \end{equation}
    Therefore, there exist scalars $\lambda_1, \dots, \lambda_{L-1}$ such that:
    \[
    \coeff_L^* = \sum_{i=1}^{L-1} \lambda_i \coeff_i.
    \]
    
    Thus, by setting $\coeff_L := \sum_{i=1}^{L-1} \lambda_i \coeff_i$, we obtain the desired coefficient vector such that:
    \[
    \feat_L = \coeff_L^\intercal \psi.
    \]

\end{proof}

Given that the dictionary is sufficient, and the $L-1$ outputs collectively utilize all necessary dictionary elements, the coefficient vector for the $L$-th output can be expressed as a linear combination of previous coefficient vectors. This demonstrates the ability to transfer compositional rules from context examples to new tasks via linear combination of coefficients.

\section{Computational Resource}
We conducted development and experiments on a Linux workstation equipped with a single NVIDIA A5000 GPU (24GB memory). A single run of the synthetic task typically takes 3–5 minutes, while a single S-RAVEN experiment run takes between 60 and 200 minutes. For RAVEN experiments, a full run requires approximately 200 minutes.


\end{document}